\theoremstyle{definition}
\newtheorem{theorem}{Theorem}
\newtheorem{corollary}[theorem]{Corollary}
\newtheorem{definition}[theorem]{Definition}
\newtheorem{proposition}[theorem]{Proposition}
\newcommand{\cf}{\emph{cf.}\xspace}
\newcommand{\bdmath}{\begin{dmath}}
\newcommand{\edmath}{\end{dmath}}
\newcommand{\beq}{\begin{equation}}
\newcommand{\eeq}{\end{equation}}
\newcommand{\bdm}{\begin{displaymath}}
\newcommand{\edm}{\end{displaymath}}
\newcommand{\bea}{\begin{eqnarray}}
\newcommand{\eea}{\end{eqnarray}}
\newcommand{\beal}{\beq \begin{array}{ll}}
\newcommand{\eeal}{\end{array} \eeq}
\newcommand{\beas}{\begin{eqnarray*}}
\newcommand{\eeas}{\end{eqnarray*}}
\newcommand{\ba}{\begin{array}}
\newcommand{\ea}{\end{array}}
\newcommand{\bit}{\begin{itemize}}
\newcommand{\eit}{\end{itemize}}
\newcommand{\ben}{\begin{enumerate}}
\newcommand{\een}{\end{enumerate}}
\newcommand{\setE}{\textsf{E}}
\newcommand{\setU}{\textsf{U}}
\newcommand{\setal}{~\emph{et~al.}\xspace}
\newcommand{\eg}{\emph{e.g.,}\xspace}
\newcommand{\ie}{\emph{i.e.,}\xspace}
\newcommand{\myParagraph}[1]{{\bf #1.}\xspace}
\newcommand{\hide}[1]{}
\newcommand{\hiddenText}{{\color{gray} hidden text.}}
\newcommand{\hideWithText}[1]{\hiddenText}
\newcommand{\Real}[1]{ { {\mathbb R}^{#1} } }
\newcommand{\SEthree}{\ensuremath{\mathrm{SE}(3)}\xspace}
\newcommand{\blue}[1]{{\color{blue}#1}}
\newcommand{\linkToPdf}[1]{\href{#1}{\blue{(pdf)}}}
\newcommand{\linkToPpt}[1]{\href{#1}{\blue{(ppt)}}}
\newcommand{\linkToCode}[1]{\href{#1}{\blue{(code)}}}
\newcommand{\linkToWeb}[1]{\href{#1}{\blue{(web)}}}
\newcommand{\linkToVideo}[1]{\href{#1}{\blue{(video)}}}
\newcommand{\linkToMedia}[1]{\href{#1}{\blue{(media)}}}
\newcommand{\award}[1]{\xspace} 
\DeclareMathOperator{\Type}{Type}
\newcommand{\setVars}{V}
\newcommand{\modules}{modules\xspace}
\newcommand{\module}{module\xspace}
\newcommand{\unit}{processor\xspace}
\newcommand{\units}{processors\xspace}
\newcommand{\uniti}{i\xspace}
\newcommand{\unitj}{j\xspace}
\newcommand{\toAdd}[1]{}
\tikzset{
graphnode/.style={
  circle,
  inner sep=0pt,
  text width=4mm,
  align=center,
  draw=black,
  fill=white
  }
}
\newcommand*\circled[1]{\tikz[baseline=(char.base)]{
            \node[graphnode] (char) {#1};}}
\newcommand*\node[1]{\circled{#1}}
\DeclarePairedDelimiter{\ceil}{\lceil}{\rceil}
\DeclarePairedDelimiter{\floor}{\lfloor}{\rfloor}
\DeclarePairedDelimiter{\pair}{\langle}{\rangle}
\newcommand{\cat}[1]{\mathcal{#1}}
\newcommand{\Cat}[1]{\mathbf{#1}}
\newcommand{\const}[1]{\mathtt{#1}}
\newcommand{\prop}{\const{Prop}}
\newcommand{\tn}{\textnormal}
\renewcommand{\ss}{\subseteq}
\newcommand{\rr}{\mathbb{R}}
\newcommand{\smset}{\Cat{Set}}
\newcommand{\ol}[1]{\overline{#1}}
\newcommand{\nn}{\mathbb{N}}
\newcommand{\true}{\const{true}}
\newcommand{\false}{\const{false}}
\newcommand{\imp}{\Rightarrow}
\newcommand{\rest}[3]{#1\big|_{[#2,#3]}}
\tikzset{
     oriented WD/.style={
        every to/.style={out=0,in=180,draw},
        label/.style={
           font=\everymath\expandafter{\the\everymath\scriptstyle},
           inner sep=0pt,
           node distance=2pt and -2pt},
        semithick,
        node distance=1 and 1,
        decoration={markings, mark=at position \stringdecpos with \stringdec},
        ar/.style={postaction={decorate}},
        execute at begin picture={\tikzset{
           x=\bbx, y=\bby,
           }}
        },
     string decoration/.store in=\stringdec,
     string decoration={\arrow{stealth};},
     string decoration pos/.store in=\stringdecpos,
     string decoration pos=.7,
     bbx/.store in=\bbx,
     bbx = 1.5cm,
     bby/.store in=\bby,
     bby = 1.5ex,
     bb port sep/.store in=\bbportsep,
     bb port sep=1.5,
     bb port length/.store in=\bbportlen,
     bb port length=4pt,
     bb penetrate/.store in=\bbpenetrate,
     bb penetrate=0,
     bb min width/.store in=\bbminwidth,
     bb min width=1cm,
     bb rounded corners/.store in=\bbcorners,
     bb rounded corners=2pt,
     bb small/.style={bb port sep=1, bb port length=2.5pt, bbx=.4cm, bb min width=.4cm, 
bby=.7ex},
		 bb medium/.style={bb port sep=1, bb port length=2.5pt, bbx=.4cm, bb min width=.4cm, 
bby=.9ex},
     bb/.code 2 args={
        \pgfmathsetlengthmacro{\bbheight}{\bbportsep * (max(#1,#2)+1) * \bby}
        \pgfkeysalso{draw,minimum height=\bbheight,minimum width=\bbminwidth,outer 
sep=0pt,
           rounded corners=\bbcorners,thick,
           prefix after command={\pgfextra{\let\fixname\tikzlastnode}},
           append after command={\pgfextra{\draw
              \ifnum #1=0{} \else foreach \i in {1,...,#1} {
                 ($(\fixname.north west)!{\i/(#1+1)}!(\fixname.south west)$) +(-
\bbportlen,0) 
  coordinate (\fixname_in\i) -- +(\bbpenetrate,0) coordinate (\fixname_in\i')}\fi 
              \ifnum #2=0{} \else foreach \i in {1,...,#2} {
                 ($(\fixname.north east)!{\i/(#2+1)}!(\fixname.south east)$) +(-
\bbpenetrate,0) 
  coordinate (\fixname_out\i') -- +(\bbportlen,0) coordinate (\fixname_out\i)}\fi;
           }}}
     },
     bb name/.style={append after command={\pgfextra{\node[anchor=north] at 
(\fixname.north) {#1};}}}
}
\tikzset{
  	unoriented WD/.style={
  		every to/.style={draw},
  		shorten <=-\penetration, shorten >=-\penetration,
  		label distance=-2pt,
  		thick,
  		node distance=\spacing,
  		execute at begin picture={\tikzset{
  			x=\spacing, y=\spacing}}
  		},
  	pack size/.store in=\psize,
  	pack size = 8pt,
  	spacing/.store in=\spacing,
  	spacing = 8pt,
  	link size/.store in=\lsize,
  	link size = 2pt,
		penetration/.store in=\penetration,
		penetration = 2pt,
  	pack color/.store in=\pcolor,
  	pack color = blue,
  	pack inside color/.store in=\picolor,
  	pack inside color=blue!20,
  	pack outside color/.store in=\pocolor,
  	pack outside color=blue!50!black,
  	surround sep/.store in=\ssep,
  	surround sep=8pt,
  	link/.style={
  		circle, 
  		draw=black, 
  		fill=black,
  		inner sep=0pt, 
  		minimum size=\lsize
  	},
  	pack/.style={
  		circle, 
  		draw = \pocolor, 
  		fill = \picolor,
  		inner sep = .25*\psize,
  		minimum size = \psize
  	},
  	outer pack/.style={
  		ellipse, 
  		draw,
  		inner sep=\ssep,
  		color=\pocolor,
  	},
  	intermediate pack/.style={
  		ellipse,
  		dashed, 
  		draw,
  		inner sep=\ssep,
  		color=\pocolor,
  	},
}
\title{Monitoring and Diagnosability of Perception Systems}
\date{}
\author{Pasquale Antonante\qquad David I. Spivak\qquad Luca Carlone
\vspace{1em}\\
Massachusetts Institute of Technology \\
Cambridge, MA 02139 \\
{\tt\small \{antonap, dspivak, lcarlone\}@mit.edu}%
}
\begin{document}
\maketitle


\begin{mdframed}[
    tikzsetting={draw=red,ultra thick},
]
\center
An updated version of this paper can be found at \href{http://arxiv.org/abs/2011.07010}{http://arxiv.org/abs/2011.07010}
\end{mdframed}

\begin{abstract}

Perception is a critical component of high-integrity applications of robotics and autonomous systems, such 
as self-driving cars. 
 In these applications, failure of perception systems may put human life at risk, and 
 a broad adoption of these technologies relies on the development of methodologies to guarantee and monitor
  safe operation as well as detect and mitigate failures. 
Despite the paramount importance of perception systems, currently there is no formal 
approach for system-level monitoring. 
In this work, we propose a mathematical model for runtime monitoring and fault detection of perception systems.
 Towards this goal, we draw connections with the literature on self-diagnosability for multiprocessor systems, and 
generalize it to (i) account for \modules with heterogeneous outputs, and (ii) add a temporal dimension to the problem, which 
is crucial to model realistic perception systems where \modules interact over time.
 This contribution results in a graph-theoretic approach that, 
  given a perception system, is able to detect faults at runtime and allows computing 
  an upper-bound on the number of faulty \modules that can be detected.
Our second contribution is to show that the proposed monitoring approach can be elegantly described with the language of
 \emph{topos theory}, which allows formulating diagnosability over arbitrary time intervals.
\end{abstract}

\section{Introduction}\label{sec:introduction}

The automotive industry is undergoing a change that could revolutionize mobility.
Self-driving cars promise a deep transformation of personal mobility and have the potential to improve safety, 
efficiency (\eg commute time, fuel), and induce a paradigm shift in how entire cities are designed~\cite{Silberg12wp-selfDriving}.
One key factor that drives the adoption of such technology is the capability of ensuring 
and monitoring safe execution. 
Consider Uber's fatal self-driving crash~\cite{ntsbuber} in 2018: the report from the National Transportation Safety Board stated that ``inadequate safety culture'' contributed to the fatal collision between the autonomous vehicle and the pedestrian. 
 The lack of safety guarantees, combined with the unavailability of formal monitoring tools, 
  is the root cause of these accidents and has a profound impact on the user's trust.
The AAA's survey~\cite{aaa} shows that 71\% of Americans claim to be afraid of riding in a self-driving car.
This is a clear sign that the industry needs a sound methodology, embedded in the design process, to guarantee 
safety and build public trust.

While safety guarantees have been investigated in the context of control and decision-making~\cite{Mitsch17ijrr-verificationObstacleAvoidance,Foughali18formalise-verificationRobots}, 
the state of the art is still lacking a formal and broadly applicable methodology for monitoring \emph{perception systems}, which 
constitute a key component of any autonomous vehicle. Perception systems provide functionalities such as localization and obstacle mapping, lane detection, detection and tracking of other vehicles and pedestrians, among others.

\myParagraph{State of Practice}
The automotive industry currently uses 
five classes of methods to claim the safety of an autonomous vehicle (AV)~\cite{Shalev-Shwartz17arxiv-safeDriving}, namely: miles driven, simulation, scenario-based testing, disengagement, and proprietary.
The \emph{miles driven} approach is the most commonly used, and is based on the statistical argument that if the probability of crashes per mile is lower in autonomous vehicles than for humans, then autonomous vehicles are safer.
This approach is problematic since the meaningful%
\footnote{Driving on empty streets provides less evidence of safety than driving in dense urban environments.}
 number of fault-free miles that the AV should drive is significant---on the order of billions of miles \cite{Kalra16tra-selfDriving,Shalev-Shwartz17arxiv-safeDriving}---which would require years to achieve and would not enable frequent software updates.
 The same approach can be made more scalable through \emph{simulation}, but unfortunately 
  creating a life-like simulator is an open problem, for some aspects even more challenging than self-driving itself. 
 \emph{Scenario-based} testing is based on the idea
 that if we can enumerate all the possible driving scenarios that could occur, then we can simply expose the AV (via simulation, 
 closed-track testing, or on-road testing) to all of those scenarios and, as a result, be confident that the AV will only make safe driving decisions. 
However, enumerating all possible corner cases is a daunting (and system-dependent) task.
Finally, \emph{disengagement}~\cite{googledisengagements} is defined as the moment when a human safety driver has to intervene in order to prevent a hazardous situation. 
However, while less frequent disengagements indicate an improvement of the AV behavior, they do not give evidence of the system safety.

An established methodology to ensure safety throughout the life cycle of an automotive system is to use a \emph{standard} that every manufacturer has to comply with.
In the automotive industry, the standard ISO 26262 (Road vehicles - functional safety)~\cite{iso26262} is a risk-based safety standard that applies to electronic systems in production vehicles, including driver assistance and vehicle dynamics control systems.
A key issue is that ISO 26262 relies on the presence of human drivers (and mostly focuses on electronic systems rather than algorithmic aspects) hence it does not readily apply to fully autonomous vehicles~\cite{Koopman16sae-autonomousVehicleTesting}.
The recent ISO 21448 (Road vehicles - Safety of the intended functionality)~\cite{iso21448} is intended to complement ISO 26262.
It provides guidance on the applicable design, verification, and validation measures needed to achieve the safety at higher levels of automation.
The standard has been first publicly released in 2019, 
and---while stressing the key role of perception---it only provides high-level considerations to inform stakeholders.
In June 2019, 11 major stakeholder in the automated driving industry including Aptiv, Baidu, BMW, Intel, and Volkswagen published a report~\cite{AptivSafetyAV} on safety in autonomous driving.
The report focuses on highly autonomous vehicles, presenting an overview of safety by design and providing a sound discussion of the verification and validation of such systems.
 The report proposes a verification and validation approach mainly based on testing, and suggests the use of \emph{monitors} to detect 
off-nominal performance at the system and subsystem level (including neural networks).

Perception has been subject to increasing attention also  
outside the automated driving industry. 
For instance, autonomous aerial vehicles are expected to disrupt air mobility in the same way autonomous cars are disrupting urban mobility~\cite{UberElevate, EHangAirMobility}.
A recent report from the European Union Aviation Safety Agency (EASA)~\cite{EASADesignNN} acknowledges both the importance of such technologies and the lack of standardized methods to guarantee trustworthiness of such systems.
The report~\cite{EASADesignNN} considers an automatic landing system as case study and investigates the challenges in developing trustworthy AI, with focus on machine learning. 

These reports and standards stress the important role of perception for autonomous systems and 
motivate us to design a rigorous framework for system-level perception monitoring.

\myParagraph{Related Work} 
Formal methods~\cite{Ingrand19irc-verificationTrends} have been recently used as a tool to study safety of autonomous systems.
Formal methods use mathematical models to analyze and verify part of a program.
The insight here is that these mathematical models can be used to rigorously prove properties of the modeled program.
This approach has been successful for decision systems, such as obstacle avoidance~\cite{Mitsch17ijrr-verificationObstacleAvoidance} and road rule compliance~\cite{Roohi18arxiv-selfDrivingVerificationBenchmark}. 
This is mainly due to the fact that decisional specifications are usually model-based and have well-defined semantics~\cite{Foughali18formalise-verificationRobots}.
The challenge in applying this approach to perception is related to the complexity of modeling the physical environment~\cite{Seshia16arxiv-verifiedAutonomy}, and the trade-off between evidence for certification and tractability of the model~\cite{Luckcuck19csur-surveyFormalMethods}.

Relevant to the approach presented is this paper is the class of \emph{runtime verification} methods.
Runtime verification is an online approach based on extracting information from a running system and using it to detect (and possibly react to) observed behaviors satisfying or violating certain properties~\cite{Bartocci18book-runtimeVerification}.
Traditionally, the task of evaluating whether a \module is working properly was assigned to a monitor, which verifies some input/output properties on the \module alone.
Balakrishnan\setal~\cite{Balakrishnan19date-perceptionVerification} use Timed Quality Temporal Logic (TQTL) to reason about desiderable spatio-temporal properties of a perception algorithm.
Kang\setal~\cite{Kang18nips-ModelAF} use model assertions, which similarly place a logical constraint on the output of a \module to detect anomalies.
We argue that this paradigm does not fully capture the complexity of perception pipelines:
while monitors can still be used to infer the state of a single \module, perception pipelines provide an additional opportunity to cross check the \emph{compatibility} of results across different \modules. 
In this paper we try to address this limitation.

Performance guarantees for perception have been investigated for specific problems. 
In particular, related work on \emph{certifiable algorithms}~\cite{Yang20arxiv-teaser, Yang20cvpr-shapeStar, Briales18cvpr-global2view} provides algorithms that are capable of identifying faulty behaviors during execution. 
These related works mostly focus on specific algorithms, while our goal is to 
establish monitoring for perception \emph{systems}, including multiple interacting modules and algorithms.

\myParagraph{Contribution}
In this paper we develop a methodology to detect and identify faulty \modules in a perception pipeline at runtime.
In particular, we address two questions
(adapted from~\cite{Brundage20arxiv-trustworthyAI}):
\begin{inparaenum}[(i)] 
  \item \emph{Can I (as a developer) verify that the perception algorithm is providing reliable interpretations of the perception data?}
  \item \emph{Can I (as regulator) trace the steps that led to an accident caused by an autonomous vehicle?}
\end{inparaenum}
Towards this goal, we draw connections with the literature on self-diagnosability for multiprocessor systems, and 
generalize it to (i) account for \modules with heterogeneous outputs, and (ii) add a temporal dimension to the problem to account for \modules interacting over time.
This contribution results in a graph-theoretic approach that, 
  given a perception system, is able to detect faults at runtime and allows computing 
  an upper-bound on the number of faulty \modules that can be detected.
This upper bound is related to the level of redundancy within the system and provides a quantitative measure of robustness.
Our second contribution is to show that the proposed monitoring approach can be elegantly described with the language of
 \emph{topos theory}~\cite{MacLane12book-toposTheory,Johnstone02book-toposTheory} and allows formulating diagnosability and faults detection over arbitrary time intervals.


\section{An Example of Perception Pipeline}

Consider the system depicted in~\cref{fig:localization_pipeline}: this is an example of 
perception system used to localize an autonomous vehicle (AV) using multiple sensor streams. 
We are going to use this system as a running example throughout the paper to elucidate our mathematical model.
\begin{figure}[ht!]
  \includegraphics[width=\textwidth]{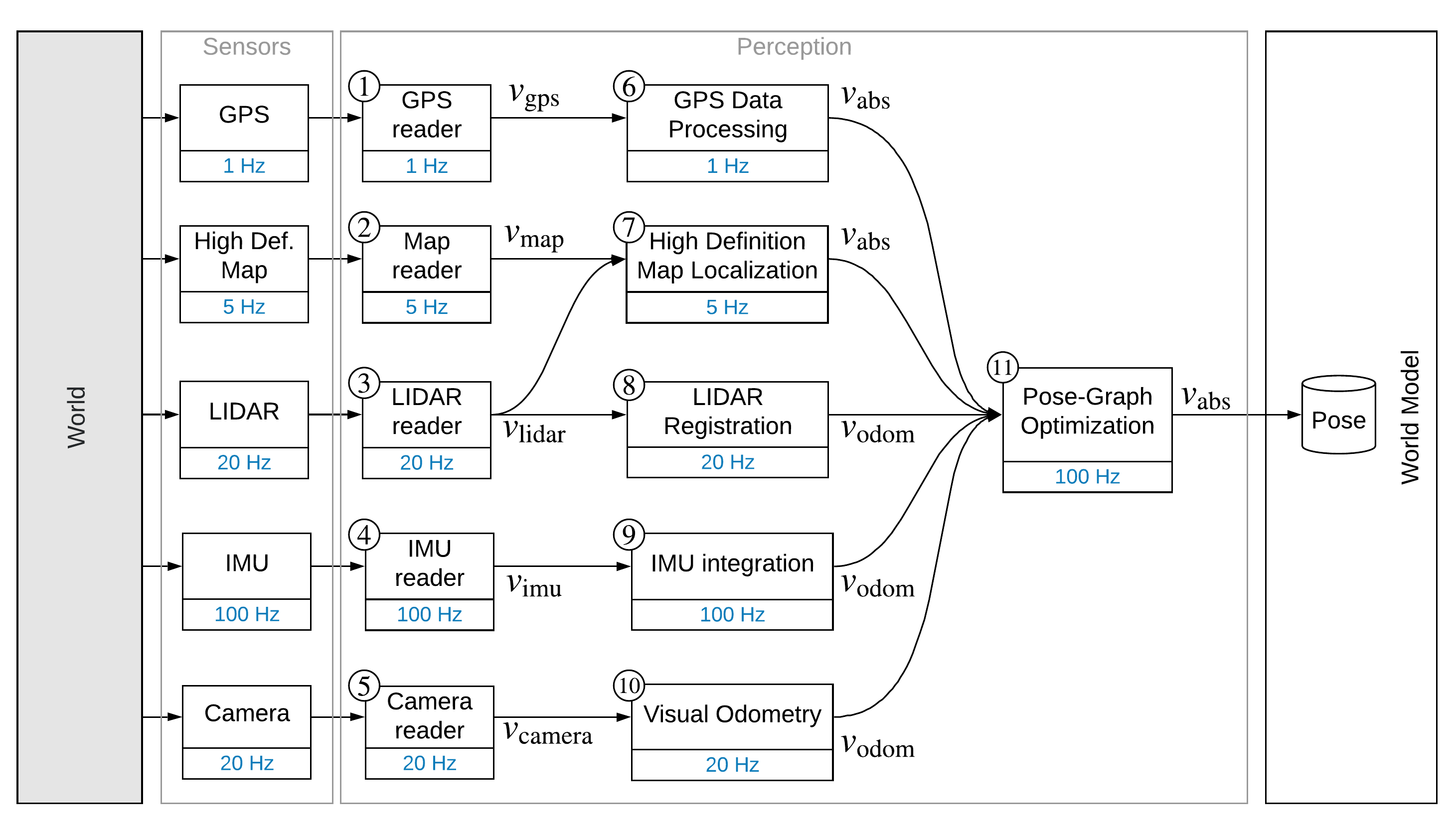}\vspace{-4mm}
  \centering
  \caption{Example of localization system for Autonomous Vehicles.}
  \label{fig:localization_pipeline}
\end{figure}

In~\cref{fig:localization_pipeline}, the AV observes the world through noisy sensors, each of which provides information to various perception \modules.
These perception \modules collect, organize, and interpret sensory information in order to create a model of the world.
In this paper, for the sake of simplicity, we consider a world model that includes only the absolute pose---position and heading---of the AV, 
an element of the 3-dimensional Special Euclidean group $\SEthree$.

In~\cref{fig:localization_pipeline},  each rectangle inside the ``Perception'' box represents a \emph{perception \module}, while each arrow represents a data stream.
Let $\setVars$ denote a set of \emph{variables}; in our example, $\setVars = \{v_{\mathrm{gps}},v_{\mathrm{map}},v_{\mathrm{lidar}},v_{\mathrm{imu}},v_{\mathrm{camera}},v_{\mathrm{abs}},v_{\mathrm{odom}}\}$.
We can tag each arrow with the variable it carries. 
Moreover, each variable has its own type.
Let $\Type:\setVars\to\smset$ be a map; we say that $\Type(v)$ is the type of variable $v$.
For example, $\Type(v_{\mathrm{imu}})=\Real{6}$, a real vector of 6 elements listing the acceleration and angular velocities measured
 by the Inertial Measurement Unit (IMU).
At any instant of time, the system takes the measurements from the sensors and estimates 
the AV's absolute pose, denoted by $v_{\mathrm{abs}}$. Let us analyze the \modules in~\cref{fig:localization_pipeline}:
\begin{itemize}
  \item The Global Positioning System (GPS) provides raw latitude and longitude data to the \emph{GPS reader}, which packages this information and sends it to \emph{GPS Data Processing}, which in turn estimates the car's absolute pose (variable $v_{\mathrm{abs}}$).
  \item The Light Detection and Ranging (LIDAR) sensor uses light in the form of a pulsed laser to measure reflection distances, producing a 3D point cloud of the surrounding environment; the \emph{LIDAR reader} gets the point cloud while the \emph{LIDAR Registration} \module
estimates the relative motion between consecutive time instants by comparing consecutive scans~\cite{Yang20arxiv-teaser},
The estimate of the relative motion of the AV is commonly referred to as ``odometry'' in robotics~\cite{Cadena16tro-SLAMsurvey} (variable $v_{\mathrm{odom}}$).
  \item The \emph{High Definition Map Localization} uses a given high definition map of the environment to estimate the absolute position (variable $v_{\mathrm{abs}}$) of the vehicle by comparing the LIDAR scan with the map.
  \item The \emph{Cameras} provide images of the environment, the \emph{Camera reader} collects raw images, 
  while 
  the \emph{Visual Odometry} module estimates the relative motion of the vehicle by comparing two consecutive images (variable $v_{\mathrm{odom}}$).
  \item The \emph{Inertial Measurement Unit} (IMU) measures accelerations and angular velocities of the car, then by integrating these values over time, \emph{IMU integration} estimates the relative motion of the vehicle (variable $v_{\mathrm{odom}}$).
  \item Finally, the \emph{Pose-Graph Optimization}~\cite{Cadena16tro-SLAMsurvey} \module estimates the vehicle's pose 
  by fusing the noisy absolute and relative pose measurements produced by the other \modules.
 \end{itemize}

Errors are common throughout this pipeline, so it is crucial for the system to self-diagnose these errors in real time, discard outliers, and produce reliable estimates of the car's pose (which can be then used for planning and control purposes). 
For instance, when operating near tall buildings, the GPS can provide unreliable measurements. Similarly, 
when used in crowded scenes, visual odometry can provide incorrect motion estimates. 
In the next sections we study how faults can be diagnosed, using this perception system as an example.

\section{Monitoring of Perception Systems}\label{sec:monitoring}

Our first contribution is to develop a mathematical model for monitoring and fault diagnosis in perception systems.
Towards this goal, we draw connections with the literature on fault diagnosis in multiprocessors systems, 
which has been extensively studied since the late 1960s.
Section~\ref{sec:diagnosability} reviews the notion of diagnosability and existing results for fault detection and to characterize 
diagnosable systems using the PMC (Preparata, Metze, and Chien) model~\cite{Preparata67tec-diagnosability}. 
Section~\ref{sec:diagnosabilityPerception} extends the notion of diagnosability to perception systems and discusses how to 
model temporal aspects arising in real AV applications.

\subsection{Diagnosability}\label{sec:diagnosability}
In the PMC~\cite{Preparata67tec-diagnosability}  model, a set of independent \emph{processors} 
is assembled such that each processor has the capability to communicate with a subset of the other processors, and all of the processors perform the same computation.
In such system a fault occurs whenever the outputs of a processor differs from the outputs of fault-free \units; the problem is to identify which is which.
Each processor is assigned a particular subset of the other processors for the purpose of testing.
Using a comparison-based mechanism, the model aims to characterize the set of faulty processors.
Clearly, it is not possible to determine the faulty subset in general, so much of the literature on multiprocessor diagnosis considers two fundamental questions~\cite{Dahbura84tc-diagnosability}:
\begin{inparaenum}[(i)] 
  \item Given a collection of \units and a set of tests, what is the maximum number of arbitrary \units which can be faulty such that the set of faulty \units can be uniquely identified?
  \item Given a set of test results, does there exist an efficient procedure to identify the faulty \units?
\end{inparaenum}
The key tool to address these questions is the \emph{diagnostic graph}~\cite{Preparata67tec-diagnosability}.

\myParagraph{Diagnostic Graph}
At any given time, each \unit is assumed to be in one of two states: \emph{faulty} or \emph{fault-free}. 
Diagnosis is based on the ability of \units to test---\ie to provide an opinion about the faultiness of---other \units.
Formally we assume that each \unit implements one or more \emph{consistency functions}; these are Boolean functions that return \emph{pass} ($0$) or \emph{fail} ($1$), depending on whether the output of two \units is in agreement. 

To perform the diagnosis, we follow~\cite{Preparata67tec-diagnosability} and model the problem as a directed graph $D=( U ,E)$, where $ U $ is the set of \units, 
while the edges $E$ represent the test assignments.
In particular, for an edge $(\uniti,\unitj)\in E$, we say that node $\uniti$ is testing node $\unitj$.
The outcome of this test is the result of the consistency functions of $\uniti$, in other words $1$ (resp. $0$) if $\uniti$ evaluates $\unitj$ as faulty (resp. fault-free).
Fault-free \units are assumed to provide correct test results, whereas no assumption is made about tests executed by faulty \units: they may produce correct or incorrect test outcomes. 
We call $D$ the \emph{diagnostic graph}. \toAdd{No \unit can test itself.}

The collection of all test results for a test assignment $E$ is called a \emph{syndrome}.
Formally, a syndrome is a function $\sigma:E\to\{0, 1\}$.
The syndrome is processed by an external entity which diagnoses the system, that is, 
makes some determination about the faultiness status of every \unit in the system.
The notion of $t$-diagnosability formalizes when it is possible to use the syndrome to diagnose and identify faults in a system.

\begin{definition}[$t$-diagnosability~\cite{Preparata67tec-diagnosability}]\label{def.t_diag}
  A diagnostic graph $D=( U ,E)$ with $|U|=n$ \units is \emph{$t$-diagnosable} ($t<n$) if, given any syndrome, all faulty \units can be identified, provided that the number of faults presented does not exceed $t$.
\end{definition}

What makes fault detection challenging is the fact that we can have multiple \units that provide incorrect results but are 
consistent with each other. This is formalized by the notion of consistent fault set.

\begin{definition}[Consistent fault set~\cite{Preparata67tec-diagnosability}]\label{def:cfs}
For a syndrome $\sigma$, denote the test outcome assigned to the edge $(\uniti,\unitj)$ by $\sigma_{\uniti\unitj}\in\{0,1\}$. 
 For a $t$-diagnosable graph $D=( U ,E)$ and a syndrome $\sigma$, a subset $ F\subseteq U $ is a \emph{$t$-consistent fault set} 
  if and only if 
 \begin{enumerate}[label=(\roman*)]
   \item $| F|\leq t$;
   \item if $\sigma_{\uniti\unitj} = 1$ then either $\uniti \in F$ or $\unitj\in F$\label{itm:cfs_faulty}
   \item if $\uniti,\unitj \in U \setminus F$ then $\sigma_{\uniti\unitj}=0$.\label{itm:cfs_faultfree} 
 \end{enumerate}
\end{definition}
\cref{itm:cfs_faulty} in \cref{def:cfs} says that if $\sigma_{\uniti\unitj}=1$ ($\uniti$ disagrees with $\unitj$) then 
at least one between $\uniti$ and $\unitj$ must be outside the consistent fault set. 
\cref{itm:cfs_faultfree} states that \units outside the faulty set are in agreement with each other; note that
the condition is not \emph{if and only if} because we are assuming that tests performed by faulty nodes are unreliable (see example in \cref{fig:example1}), therefore we can have $\sigma_{\uniti\unitj}=0$ when $\uniti$ is faulty and returns a wrong test result.

\myParagraph{Example}
Consider the simple example in \cref{fig:example1} (adapted from \cite{Preparata67tec-diagnosability}).
For the graph in \cref{fig:example1}, a syndrome will be represented as a list of five elements, $(\sigma_{12}, \sigma_{23}, \sigma_{34}, \sigma_{45}, \sigma_{51} )$.
Assume exactly one of the \units, say \node{1} (w.l.o.g), is faulty. Then 
$\sigma_{23} = \sigma_{34} = \sigma_{45} = 0$ and $\sigma_{51} = 1$,
\ie \node{5} correctly identifies \node{1} as faulty and $\sigma_{12}$ could be either 0 or 1: being a faulty node, \node{1} may or may not correctly diagnose \node{2}; in the figure we label that edge by $x$.
In this simple case it is easy to check that, assuming there is only one faulty \unit, we can correctly identify that the faulty \unit is \node{1}.
It is natural to ask: can we identify a faulty condition with two faulty \units?
In this graph, we cannot: to prove that, it is sufficient to show that two different faulty conditions generate the same syndrome (\cref{fig:example1c}).
\vspace{-2em}
\begin{figure}[h]
  \centering
  \subfloat[][Diagnostic graph]{
    \includegraphics[width=0.30\textwidth]{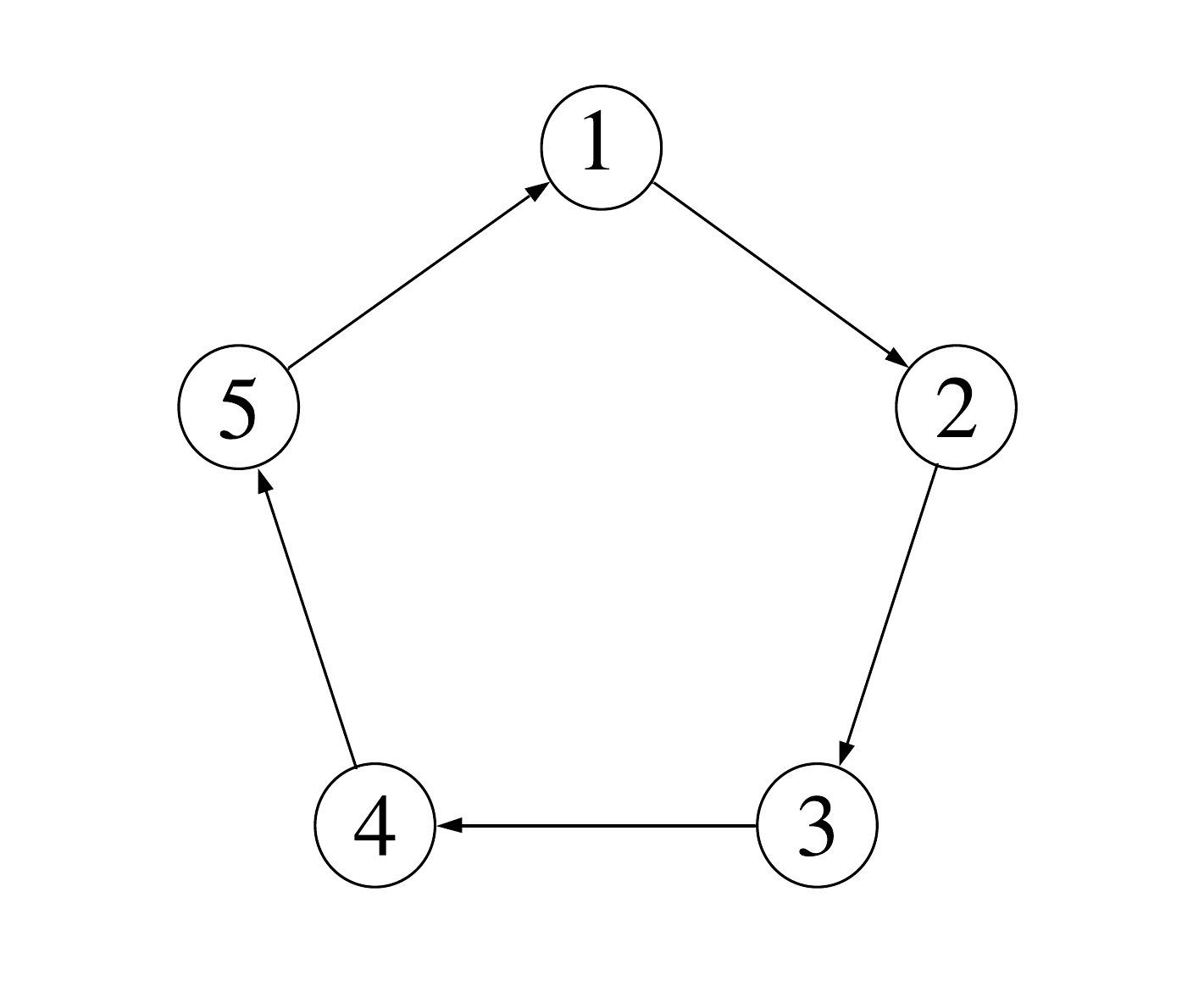}
    \label{fig:example1a}
  }
  \subfloat[][Single fault]{
    \includegraphics[width=0.30\textwidth]{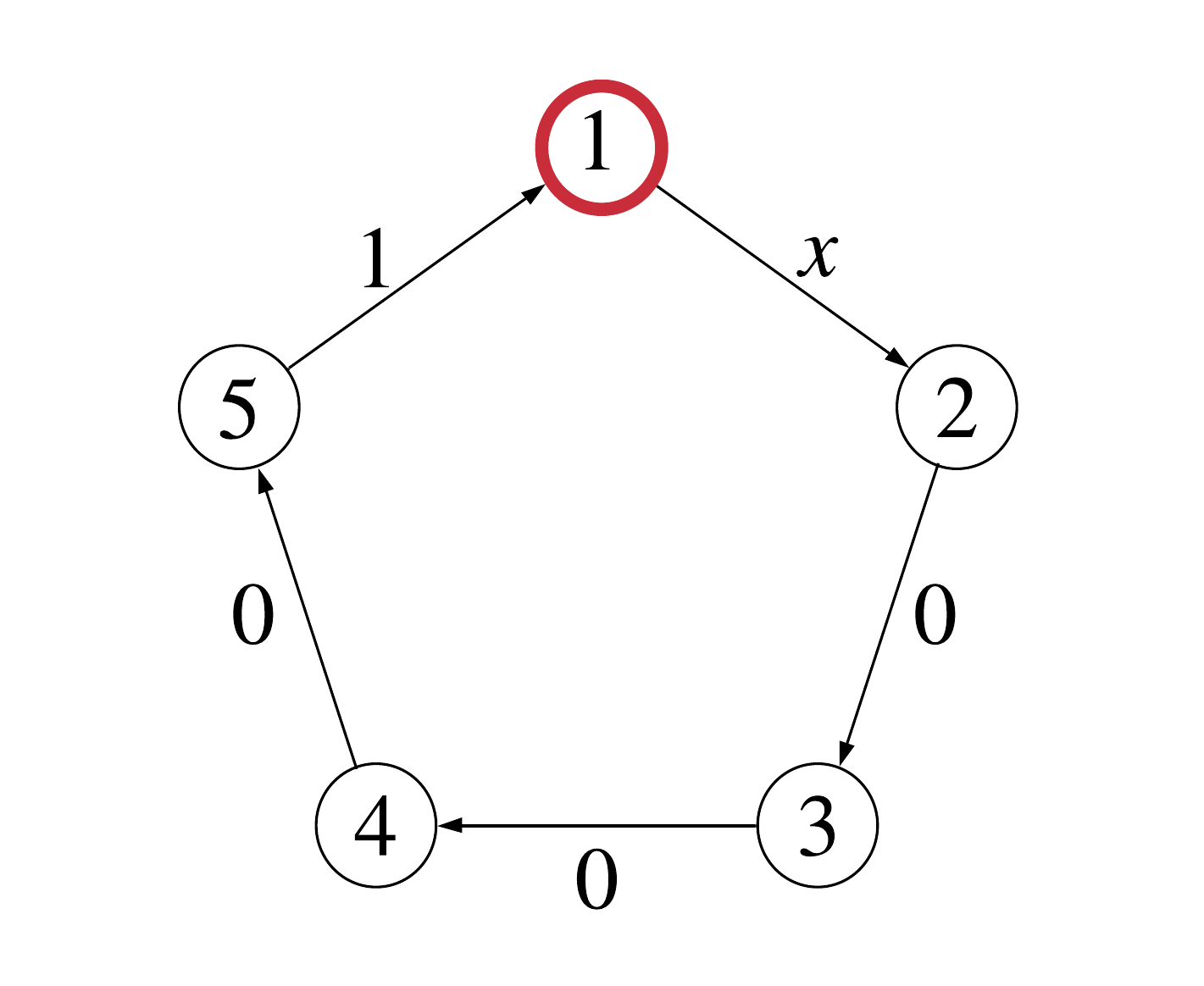}
    \label{fig:example1b}
  }
  \subfloat[][Double fault]{
    \includegraphics[width=0.30\textwidth]{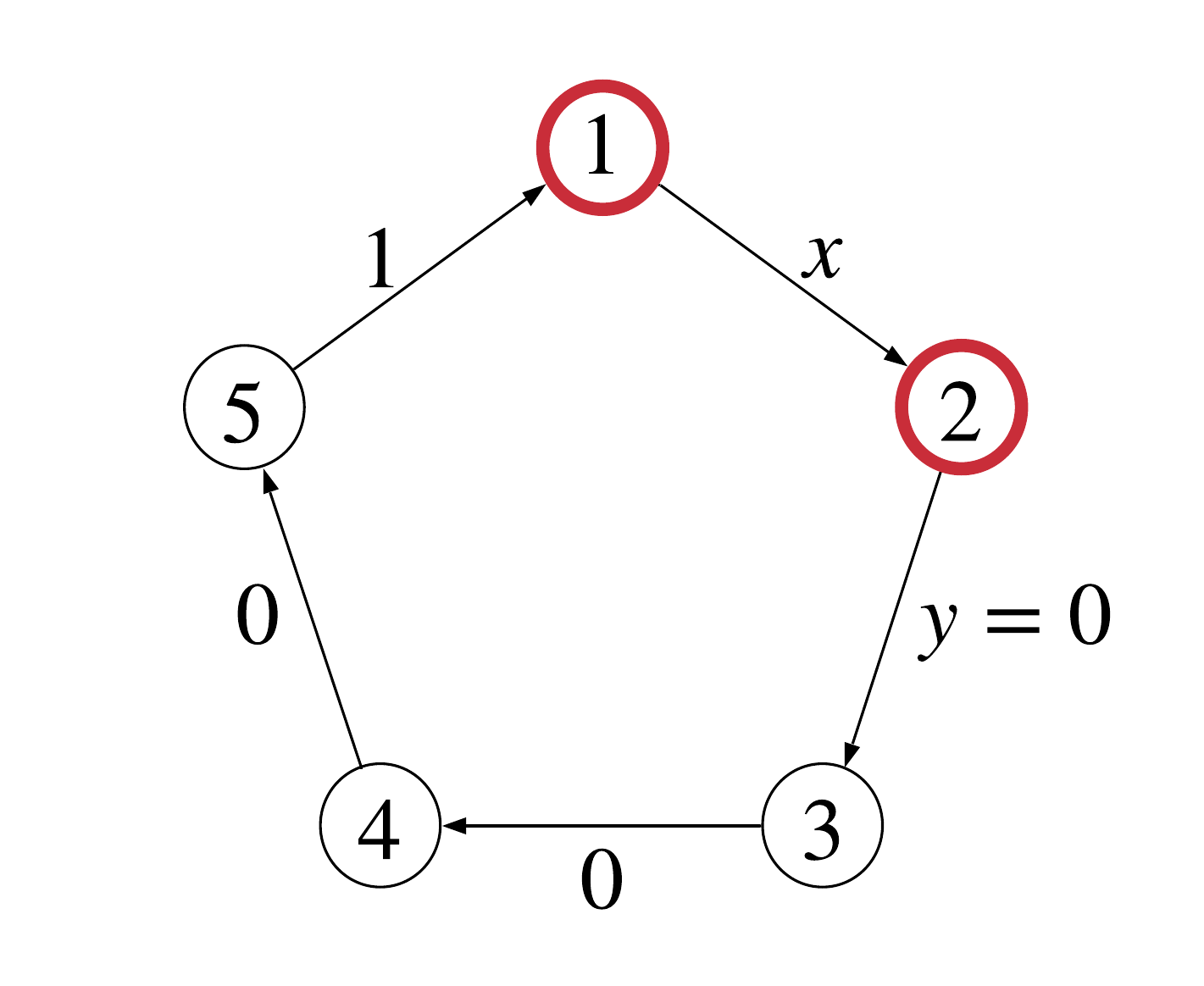}
    \label{fig:example1c}
  }
  \caption{(a) Example of diagnostic graph with five \units and (b)-(c) two faulty conditions (faulty processors shown in red) exhibiting the same syndrome.}
  \label{fig:example1}
\end{figure}

The problem of determining the maximum value of $t$ for which a given system is $t$-diagnosable is called the \emph{diagnosability problem}.
We denote the maximum value of $t$-diagnosability of a graph $D$ by $t(D)$.

\subsubsection{Characterization of $t$-diagnosability}
Consider a directed graph $D=( U ,E)$. For $\uniti \in U $, we denote by $\delta_{\tn{in}}(\uniti)$ the number of edges directed toward $\uniti$ (in-degree). We denote by $\delta_{\tn{in}}(D) = \min_{\uniti\in U } d_{\tn{in}}(\uniti)$ the minimum in-degree of the graph.
Moreover, we denote $\Gamma(\uniti) = \{\unitj \in U \mid (\uniti,\unitj) \in E\}$ the \emph{testable} set of $\uniti$ (outgoing neighbors 
of $\uniti$).
Finally, for $ X\subset U$, denote $\Gamma(X) = \{\bigcup_{\uniti \in X} \Gamma(\uniti) \setminus X\}$.

We can now state the following fundamental theorem.
\begin{theorem}[Characterization of $t$-diagnosability~\cite{Hakimi74tc-diagnosability}]\label{thm:pcm}
Let $D=( U ,E)$ be a diagnostic graph with $| U |=n$ \units. Then $D$ is $t$-diagnosable if and only if
\begin{enumerate}[label=(\roman*)]
    \item\label{itm:t_ub} $n\geq 2t+1$;
    \item\label{itm:t_lb} $\delta_{\tn{in}}(D) \geq t$; and
    \item\label{itm:t_p} for each integer $p$ with $0\leq p<t$, and each $ X\subset U $ with $| X|=n-2t+p$ we have $|\Gamma( X)| > p$.
\end{enumerate}
\end{theorem}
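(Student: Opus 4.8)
The plan is to reduce $t$-diagnosability to a purely combinatorial statement about pairs of candidate fault sets and then check conditions \ref{itm:t_ub}--\ref{itm:t_p} against it. First I would establish a \emph{distinguishability lemma}, the workhorse of the argument: two distinct subsets $F_1,F_2\subseteq U$ with $|F_1|,|F_2|\le t$ admit a common consistent syndrome (i.e.\ cannot be told apart by the diagnoser) if and only if there is \emph{no} edge from $U\setminus(F_1\cup F_2)$ into the symmetric difference $F_1\triangle F_2$. The ``only if'' direction is easy: if some tester $i\in U\setminus(F_1\cup F_2)$, fault-free in both scenarios, tests a node $j\in F_1\triangle F_2$, then, since fault-free units report correctly (cf.\ the example of \cref{fig:example1}), $\sigma_{ij}$ is forced to $1$ under the scenario that puts $j$ in the fault set and to $0$ under the other, so no syndrome serves both. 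For ``if'' I would write down a common syndrome explicitly: for $i\notin F_1\cup F_2$ set $\sigma_{ij}=[j\in F_1\cap F_2]$ (well defined precisely because no such $j$ lies in $F_1\triangle F_2$); for any tester $i$ faulty in at least one scenario set $\sigma_{ij}$ to the value demanded by the scenario in which $i$ is fault-free (arbitrary if faulty in both). One checks this is consistent with both $F_1$ and $F_2$ in the sense of \cref{def:cfs}. Granting the lemma, $D$ is $t$-diagnosable iff every distinct pair $F_1,F_2$ of size $\le t$ has an edge from $U\setminus(F_1\cup F_2)$ into $F_1\triangle F_2$.

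For \textbf{necessity} I would assume $t$-diagnosability and contrapose each condition by exhibiting an indistinguishable pair whenever it fails. If \ref{itm:t_ub} fails ($n\le 2t$), partition $U$ into two nonempty parts $F_1,F_2$ of size $\le t$; then $U\setminus(F_1\cup F_2)=\emptyset$ and no offending edge can exist. If \ref{itm:t_lb} fails, pick $w$ with in-degree $<t$, let $T$ be its (at most $t-1$) testers, and take $F_1=T$, $F_2=T\cup\{w\}$; here $F_1\triangle F_2=\{w\}$ and all testers of $w$ lie in $T\subseteq F_1\cup F_2$. If \ref{itm:t_p} fails, fix the witnessing $X$ with $|X|=n-2t+p$ and $|\Gamma(X)|\le p$, and set $R=(U\setminus X)\setminus\Gamma(X)$; then $|R|=2t-p-|\Gamma(X)|\le 2t-2|\Gamma(X)|$, so $R$ splits into disjoint nonempty $R_1,R_2$ each of size $\le t-|\Gamma(X)|$. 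Taking $F_1=\Gamma(X)\cup R_1$, $F_2=\Gamma(X)\cup R_2$ yields distinct sets of size $\le t$ with $U\setminus(F_1\cup F_2)=X$ and $F_1\triangle F_2=R$; since no node of $X$ tests anything outside $X\cup\Gamma(X)$, no edge runs from $X$ into $R$, so the pair is indistinguishable.

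For \textbf{sufficiency} I would argue by contradiction: suppose \ref{itm:t_ub}--\ref{itm:t_p} hold yet some distinct pair $F_1,F_2$ of size $\le t$ is indistinguishable, so no edge runs from $X:=U\setminus(F_1\cup F_2)$ into $\Delta:=F_1\triangle F_2$. Then $\Gamma(X)\cap\Delta=\emptyset$, hence $\Gamma(X)\subseteq F_1\cap F_2$ and $|\Gamma(X)|\le|F_1\cap F_2|=:c$. Now split on $|F_1\cup F_2|$. If $|F_1\cup F_2|\le t$, pick any $w\in\Delta$ (nonempty, as $F_1\ne F_2$); its testers all lie in $(F_1\cup F_2)\setminus\{w\}$, a set of size $\le t-1$, contradicting $\delta_{\tn{in}}(w)\ge t$ from \ref{itm:t_lb}. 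If $|F_1\cup F_2|>t$, put $p:=2t-|F_1\cup F_2|$; since $|F_1\cup F_2|\le|F_1|+|F_2|\le 2t$ we get $0\le p<t$, the set $X$ has size $n-|F_1\cup F_2|=n-2t+p$, and $|\Gamma(X)|\le c\le p$, directly violating \ref{itm:t_p}. Either branch is a contradiction, so no indistinguishable pair exists.

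I expect the main obstacle to be the cardinality bookkeeping in the necessity of \ref{itm:t_p}: making $R$ split into two pieces each within the residual budget $t-|\Gamma(X)|$, which hinges on deriving $|\Gamma(X)|\le p$ (exactly the negation of the conclusion of \ref{itm:t_p}) to get $|R|\le 2\bigl(t-|\Gamma(X)|\bigr)$. The distinguishability lemma is conceptually the heart of the argument, but once its explicit common syndrome is written down the rest is the two counting arguments above; condition \ref{itm:t_ub} enters only to guarantee that the set $X$ appearing in the sufficiency argument has legitimate (nonnegative) cardinality.
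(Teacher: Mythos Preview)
The paper does not actually prove \cref{thm:pcm}; it is quoted from \cite{Hakimi74tc-diagnosability} and used as a black box, so there is no in-paper proof to compare against. Your argument is correct and is essentially the classical Hakimi--Amin proof: the distinguishability lemma reduces $t$-diagnosability to the purely combinatorial requirement that every distinct pair of candidate fault sets $F_1,F_2$ of size at most $t$ be separated by an edge from $U\setminus(F_1\cup F_2)$ into $F_1\triangle F_2$, and both directions then follow by the explicit constructions and inclusion--exclusion counting you give. The cardinality bookkeeping in the necessity of \ref{itm:t_p} (splitting $R$ within the residual budget $t-|\Gamma(X)|$) and the two-case split in sufficiency are both sound.

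Two small remarks. First, your closing comment that condition \ref{itm:t_ub} serves only to make $|X|$ nonnegative in the sufficiency step understates matters: in fact \ref{itm:t_p} already forces \ref{itm:t_ub}, since if $t<n\le 2t$ one may take $p=2t-n\in[0,t)$ and $X=\emptyset$, whence $|\Gamma(X)|=0\le p$ violates \ref{itm:t_p}. So \ref{itm:t_ub} is listed for ease of checking rather than logical independence. Second, your distinguishability lemma correctly relies on the PMC assumption that fault-free testers always report correctly, which the paper states in prose; be aware that \cref{def:cfs} as written encodes only half of this constraint (its items \ref{itm:cfs_faulty} and \ref{itm:cfs_faultfree} are contrapositives of one another and omit the requirement $\sigma_{ij}=1$ when $i\notin F$, $j\in F$), so your lemma leans on the model description in the text rather than on \cref{def:cfs} alone.
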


Using \cref{thm:pcm} we can quickly find that the maximum $t$ in the graph in \cref{fig:example1} is $t=1$ because
$ t\leq \lfloor \frac{n-1}{2} \rfloor = 2 $ and $t\leq \delta_{\tn{in}}(D) = 1$.
 Moreover, condition \ref{itm:t_p} holds ($t=1$ forces $p=0$ and each $X\subseteq U$ with $|X|=3$ has $|\Gamma(X)| > 0$).

A naive implementation of \cref{thm:pcm} would lead to an algorithm of time complexity $O(n^{t+2})$~\cite{Bhat82acm-diagnosability}.
Bhat~\cite{Bhat82acm-diagnosability} propose an improved algorithm to find the maximum $t$ of an arbitrary graph in $O(n^{u+2} \log_2(u))$ where $u=\min \{ \delta_{\tn{in}}(D), \floor{(n-1)/2}\}$.
Since this approach can be impractical for large graphs, they also propose a polynomial algorithm to find a suboptimal value of $t$ in $O(|U|^{3/2} +|E|)$.

\subsubsection{Fault Detection}
Once we have a syndrome on a $t$-diagnosable graph, we would like to actually identify the set of faulty \units, provided that the number of faults does not exceed $t$.
Dahbura\setal~\cite{Dahbura84tc-diagnosability} showed that the problem of identifying the set of faulty \units is related to the problem of finding the minimum vertex cover set of an undirected graph (i.e.\ the smallest set $C$ of vertices such that every edge has an endpoint in $C$).
In general, the problem of finding a minimum vertex cover is in the class of NP-complete problems, meaning that there is no known deterministic algorithm that is guaranteed to solve the problem in polynomial time, but the validity of any solution can be tested in polynomial time.
However, the work~\cite{Dahbura84tc-diagnosability} exploits some special properties of $t$-diagnosable graphs to propose an algorithm with time complexity $O(n^{2.5})$ for fault identification.
In a later work, Sullivan~\cite{Sullivan88tc-diagnosability} proposes an algorithm with time complexity $O(t^3+|E|)$ for fault identification and proved that this is the tightest bound if $t$ is $o(n^{5/6})$.
These results ensure us that fault identification is practical in real-time and scales well with bigger diagnostic graphs.

\subsection{Diagnosability for Perception}\label{sec:diagnosabilityPerception}
In this section we generalize the approach of~\cref{sec:diagnosability} to make it applicable to perception systems.
In particular, while in~\cref{sec:diagnosability} 
multiple processors perform the same computation and yield the same type of output, 
perception systems are characterized by the fact that each \module produces different variables and potentially at different rates (\cref{fig:localization_pipeline}).
Therefore, in order to build on the diagnosability framework described in the previous section, we have to 
\begin{inparaenum}[(i)] 
  \item define consistency functions for a perception system with arbitrary \modules, and 
  \item capture temporal aspects that are missing in the original PMC model~\cite{Preparata67tec-diagnosability}
\end{inparaenum}

\begin{figure}[t]
  \includegraphics[width=0.7\linewidth]{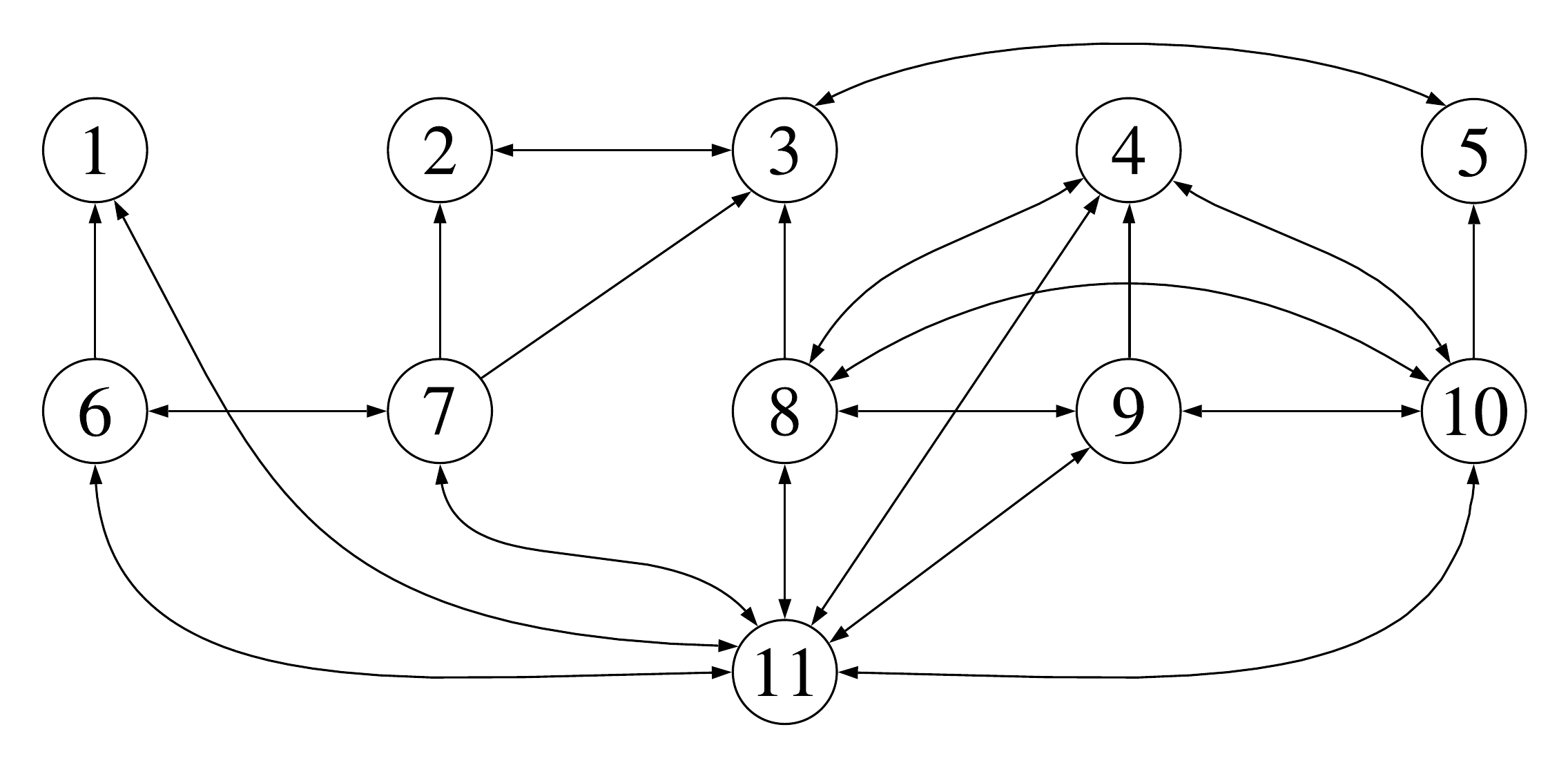}
  \centering
  \caption{Example of diagnostic graph $D=(U ,E)$ for the localization pipeline in~\cref{fig:localization_pipeline}.}
  \label{fig:diagnostic_graph}
\end{figure}

\subsubsection{Perception Consistency Functions}

Consistency functions are used to implement pairwise tests for each edge in a diagnostic graph.
Due to the heterogeneity of the \modules of a perception system, we can have different types of consistency functions, which 
we describe in the rest of this section. Using these consistency functions, we can build a diagnostic graph: 
for instance, in~\cref{fig:diagnostic_graph} we report an example of diagnostic graph for the localization example in~\cref{fig:localization_pipeline}. 

\myParagraph{Input and Output admissibility}
Consider the case in which the camera captures a very dark image caused by a sudden change of brightness in the scene.
Upon receiving camera images, the Visual Odometry \module can detect the underexposed image as unintended and report a failure
 from testing the camera \module.
If we look at the example in \cref{fig:localization_pipeline}, we can capture this test as an edge between \modules $\node{10}\to\node{5}$.
In perception systems, this type of consistency function is very common.
Following the same reasoning we can add a consistency checks between any \module and its predecessor, for example, if the point cloud provided by the LIDAR (\node{3}) to LIDAR Registration (\node{8}) contains $N$ coincident points we clearly have a fault in the LIDAR \module. These edges are added to the diagnosis graph in~\cref{fig:diagnostic_graph}. 
Note that using this logic we can also monitor output admissibility (the output of a \module will be the input of another). 

\myParagraph{Input Consistency}
Consider the IMU integration \module, suppose it receives data from three IMUs, the \module can compare the three noisy measurements and detect if one of the three does not agree with the others; for example an IMU provides very different acceleration measurements. 
Hence the IMU integration \module can test the three IMU reader modules.
\toAdd{this is fishy: not true if you only have 2 edges. We should generalize to a factor graph/hypergraph}

\myParagraph{Output Consistency}
Consider the GPS reader and the Map reader.
They don't directly share information but suppose that a GPS measurement return a latitude and longitude that the map predicts to be occupied by water.
The map reader, using this information, can identify the faulty measurement of the GPS, returning fail whenever testing the GPS.
In our perception example, this test correspond to an edge $\node{2}\to\node{1}$ and/or $\node{2}\to\node{6}$.

\myParagraph{Input/Output Consistency}
Consider the case in which Visual Odometry is not able to correctly estimate the pose of the vehicle.
After fusing all the incoming pose measurements, the Pose-Graph Optimization \module \node{11}, can identify that visual odometry produced 
an off-nominal value~\cite{Lajoie19ral-DCGM}. 
Input/Output consistency differs from Input consistency in that it needs to compute the output variable in order to test the validity of the inputs.
In our system this example corresponds to the edges $\node{11}\to\node{6}$, $\node{11}\to\node{7}$, \ldots, $\node{11}\to\node{10}$.

\myParagraph{Example}
Each \module in~\cref{fig:localization_pipeline} can use one or more of these consistency functions described above to
test other \modules in the system. 
Each test can detect a subset of failures that might occur, so the combination of all available tests maximize the set of detectable faults in the system. In~\cref{fig:diagnostic_graph}, we report an example of diagnostic graph for the localization example in~\cref{fig:localization_pipeline}. 
By inspection and using condition (ii) in~\cref{thm:pcm}, we know that $t\leq 2$ because \node{6} has in-degree equal to $2$.
By checking condition (iii) we discover that the graph is in fact only $1$-diagnosable.
In particular, for $t=2$, $p=1$ and $ X=\{1, 2, 3, 4, 5, 8, 9, 10\}$ ($|X|=11-4+1=8$) we have that $\Gamma(X) = \{11\}$ therefore $|\Gamma( X)|\leq p$.
This means we can uniquely identify faulty modules only in the case where just one such fault occurs.

\subsubsection{Temporal Diagnostic Graphs}

In the previous section, we considered instantaneous fault diagnosis. In other words, at any iteration, given a diagnostic graph and a syndrome we can detect and identify faulty \units only considering tests occurring at that time instant. 
However, perception \modules evolve over time and considering the temporal dimension allows adding temporal checks 
and modeling systems with \modules operating at multiple frequencies.
In the following, we extend the notion of diagnostic graphs to account for the temporal dimension of perception. 

In a perception pipeline, different \modules publish information at different frequencies.
For example consider the system in~\cref{fig:localization_pipeline}. Within each second, the IMU publishes 100 times, whereas the GPS reader only publishes once.
We can thus take the subgraph $D'\ss D$ on nodes 
according to minimum publishing frequencies and with edges that have both endpoints in $D'$.
For instance, consider the nodes that publish at least at \SI{100}{\hertz}, those nodes are \node{4}, \node{9}, and \node{11}. 
The resulting subgraph of $D$ that we would obtain taking these nodes in depicted in~\cref{fig:diagnostic_graph_100hz}. 
Note that we might have different subgraphs for each publishing frequency as in~\cref{fig:diagnostic_graphsByRate}. 
Clearly we can perform fault detection over each of these ``instantaneous'' diagnostic graphs. However, as mentioned above, 
considering the evolution of the system over time provides further opportunities for fault detection.
\begin{figure}[h]
  \centering
  \subfloat[][\SI{100}{\hertz}]{
    \includegraphics[width=0.12\textwidth]{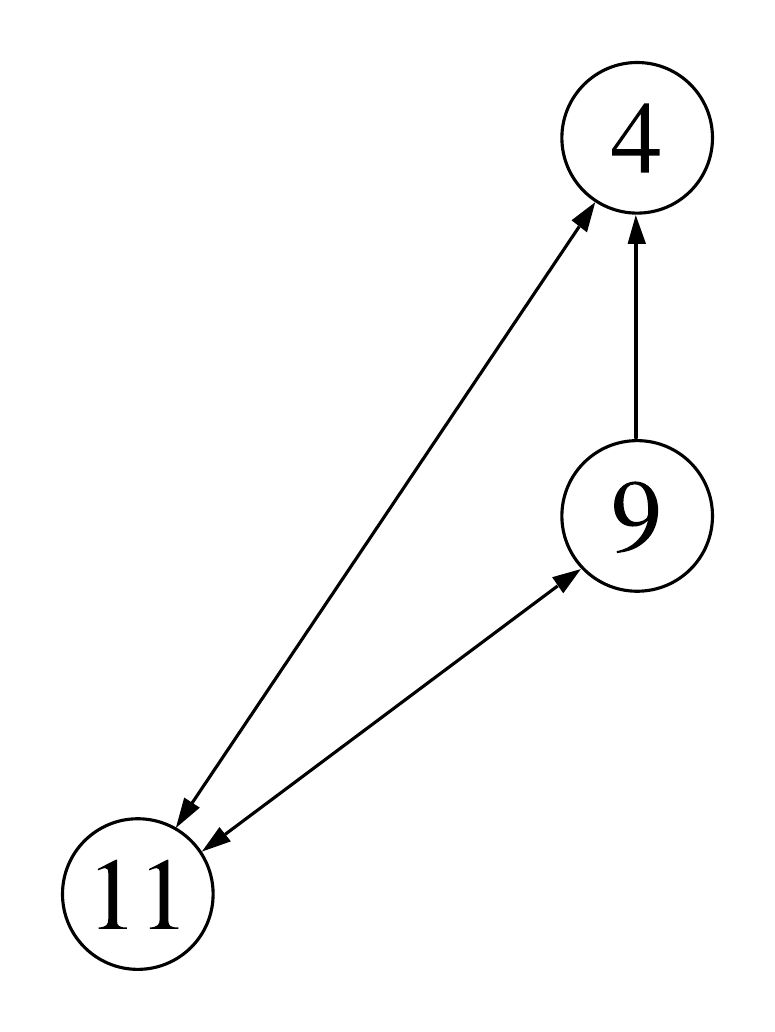}
    \label{fig:diagnostic_graph_100hz}
  }
  \subfloat[][\SI{20}{\hertz}]{
    \includegraphics[width=0.2\textwidth]{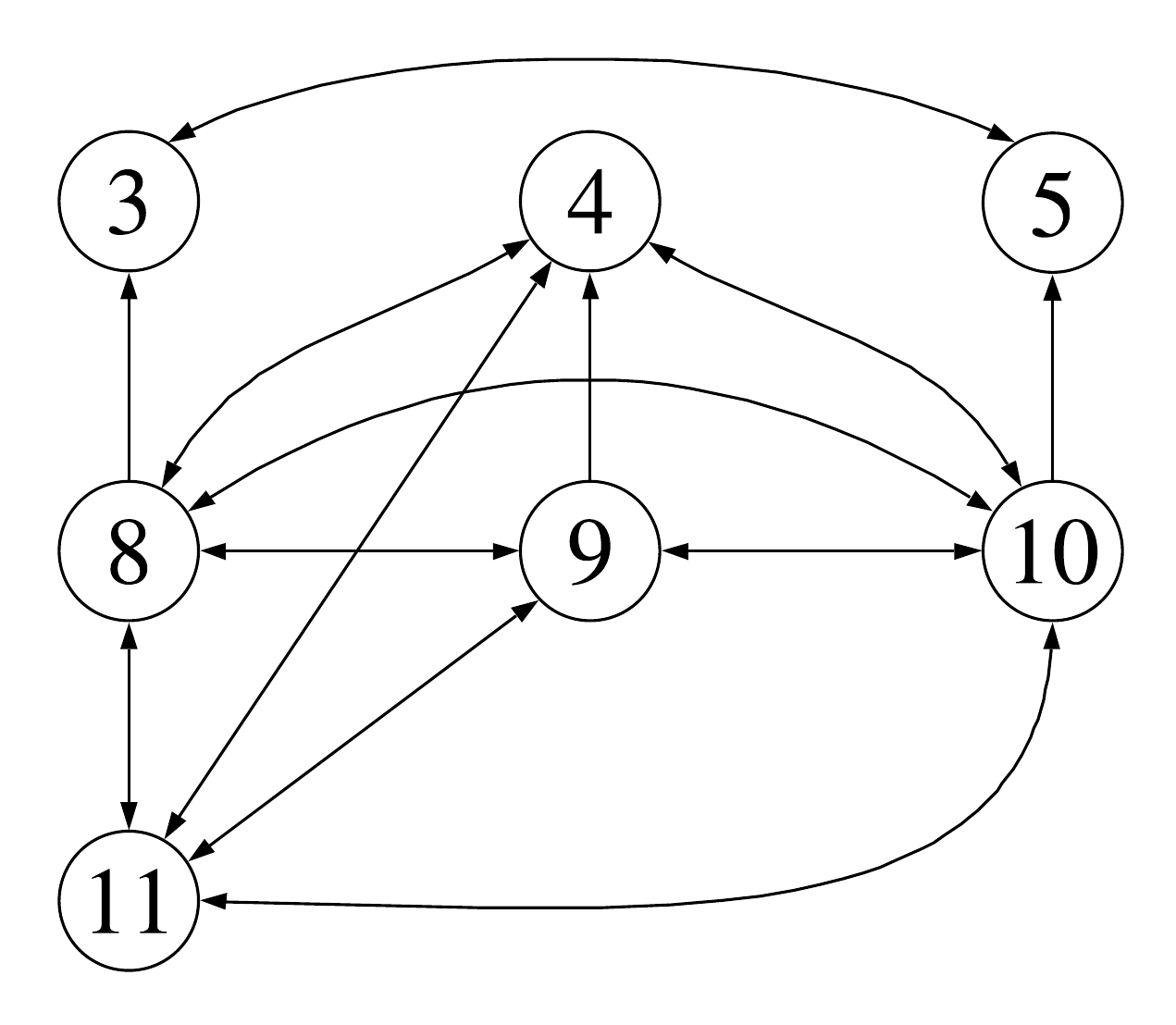}
    \label{fig:diagnostic_graph_20hz}
  }
  \subfloat[][\SI{5}{\hertz}]{
    \includegraphics[width=0.25\textwidth]{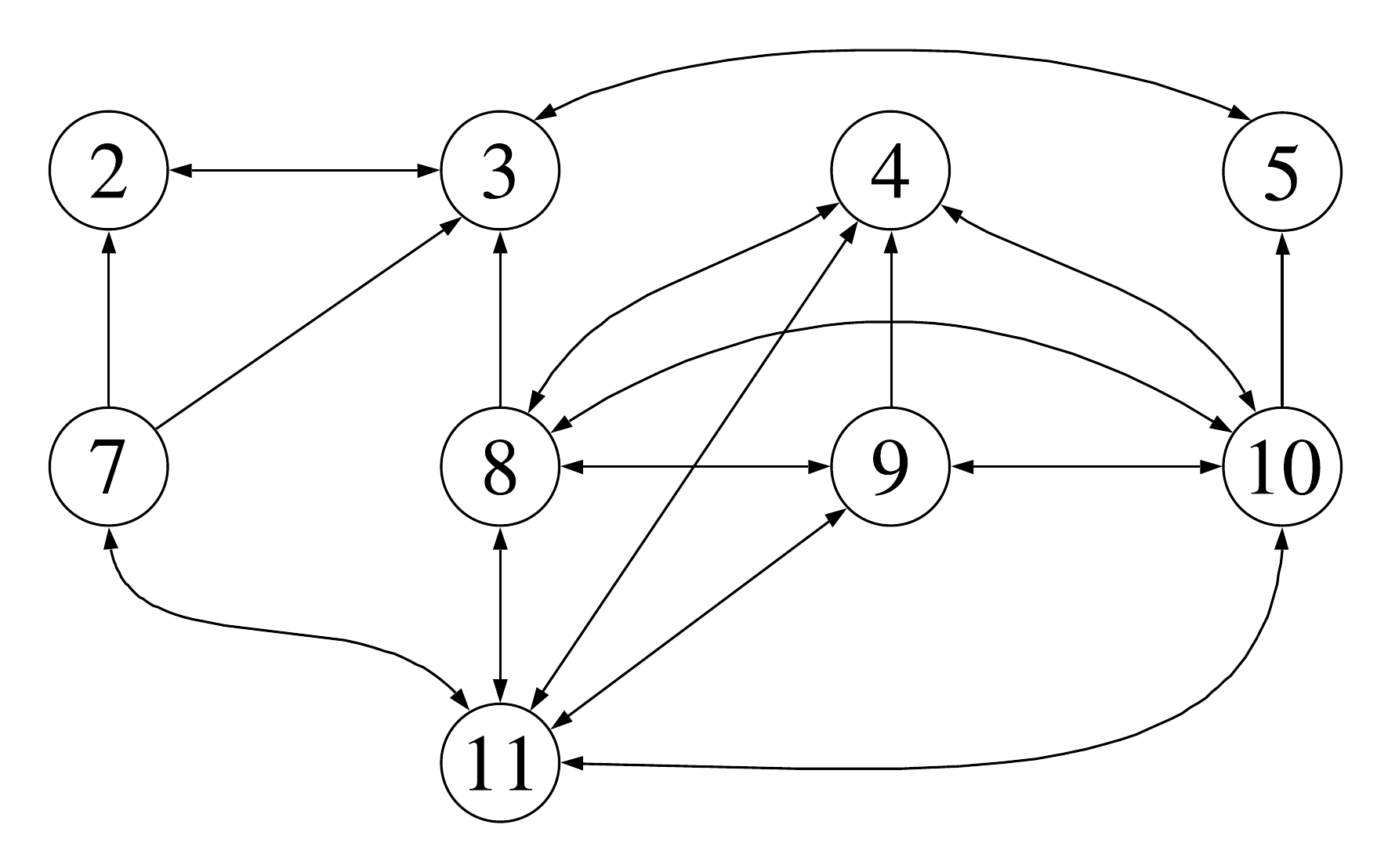}
    \label{fig:diagnostic_graph_5hz}
  }
  \subfloat[][\SI{1}{\hertz}]{
    \includegraphics[width=0.35\textwidth]{figures/diagnostic_graph.pdf}
    \label{fig:diagnostic_graph_1hz}
  }
  \caption{Subgraphs of the diagnostic graph in~\cref{fig:diagnostic_graph}, one for each publishing frequency.}
  \label{fig:diagnostic_graphsByRate}
\end{figure}

If we consider an arbitrary interval $[a,b]$ of time (taken in seconds), over the interval we have the opportunity to collect multiple diagnostic graphs with their syndromes (\cref{fig:dstacks}). 
Furthermore, the output of the modules of these graphs cannot be arbitrary, and must have some temporal consistency.
For instance, 
because motion is continuous, measurements of the AV pose at consecutive times should be close to one another.
 Another example is that, since the AV has a limited braking power (imposed by physical constraints), a set of consecutive IMU acceleration measurements cannot exceed some bounds.
Therefore the diagnostic graphs at different moments in time can be connected via temporal consistency function, \ie we can make tests (or edges) that go across time. We call the collection of the diagnostic graphs in the time interval $[a,b]$ 
and the corresponding temporal edges a {\bf $[a,b]$-temporal diagnostic graph} (\cref{fig:dstacks}).
The simplest form of \emph{temporal consistency} involves an edge connecting two identical nodes occurring at consecutive time steps, 
\eg two consecutive IMU measurements.
However it can easily be extended to different \modules and neighboring (but not adjacent) time steps.

Besides being more expressive, 
this approach has the potential to greatly increase the $t$-diagnosability of the system.
Intuitively, the temporal diagnostic graph becomes larger, but also more diagnosable, when longer intervals of time are considered.
\begin{corollary}\label{cor:more_edges}
If $D\ss D'$ are two graphs with the same set of vertices, then $t(D)\leq t(D')$.
\end{corollary}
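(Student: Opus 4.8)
The plan is to reduce the claim to a monotonicity argument on the three conditions of \cref{thm:pcm}. Writing $D=(U,E)$ and $D'=(U,E')$ with $E\ss E'$ and $|U|=n$, I set $t = t(D)$, so that $D$ is $t$-diagnosable and hence satisfies conditions \ref{itm:t_ub}--\ref{itm:t_p} of \cref{thm:pcm}. It then suffices to show that $D'$ is \emph{also} $t$-diagnosable: since $t(D')$ is by definition the largest value for which $D'$ is diagnosable, this yields $t(D')\geq t=t(D)$. Thus the whole argument amounts to verifying that each of the three conditions, being satisfied by $D$, is inherited by the denser graph $D'$.

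Condition \ref{itm:t_ub} is immediate, because $n$ depends only on the common vertex set $U$, so $n\geq 2t+1$ holds verbatim for $D'$. For condition \ref{itm:t_lb}, I would note that adding edges can never decrease an in-degree: since $E\ss E'$, every node has at least as many incoming edges in $D'$ as in $D$, whence $\delta_{\tn{in}}(D')\geq \delta_{\tn{in}}(D)\geq t$. For condition \ref{itm:t_p}, the key observation is that the testable-set operator is monotone under edge addition: writing $\Gamma_{D}$ and $\Gamma_{D'}$ for the respective operators, $E\ss E'$ gives $\Gamma_{D}(\uniti)\ss\Gamma_{D'}(\uniti)$ for every node $\uniti$, and taking unions over any $X\subset U$ while removing the same set $X$ from both sides preserves the inclusion, so $\Gamma_{D}(X)\ss\Gamma_{D'}(X)$ and therefore $|\Gamma_{D'}(X)|\geq|\Gamma_{D}(X)|>p$ for every admissible $p$ and $X$. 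Hence all three conditions transfer, $D'$ is $t$-diagnosable, and the corollary follows.

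This is essentially a routine monotonicity check, so I do not anticipate a genuine obstacle; the one place that warrants care is condition \ref{itm:t_p}, where the definition $\Gamma(X)=\bigl(\bigcup_{\uniti\in X}\Gamma(\uniti)\bigr)\setminus X$ contains a set difference. There I would make explicit that subtracting the \emph{same} set $X$ from two nested unions preserves nesting, so that the strict inequality indeed propagates from $D$ to $D'$. A definition-level proof is also available as a sanity check—any syndrome on $E'$ restricts to a syndrome on $E$, and since the restricted syndrome already uniquely identifies up to $t$ faults in $D$, the same identification works in $D'$ by discarding the extra test outcomes—but I would present the characterization-theorem route above, since it avoids reasoning directly about consistent fault sets.
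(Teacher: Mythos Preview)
Your proposal is correct and follows essentially the same route as the paper: verify that conditions \ref{itm:t_ub}--\ref{itm:t_p} of \cref{thm:pcm} transfer from $D$ to the denser graph $D'$ via the monotonicity of $\delta_{\tn{in}}$ and $\Gamma$ under edge addition. Your write-up is simply more explicit, particularly in spelling out why the set difference in $\Gamma(X)$ preserves the inclusion, which the paper leaves implicit.
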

\begin{proof}
Assume conditions (i), (ii), and (iii) hold for $\pair{D,t}$.
Since $D$ and $D'$ have the same number of \units, condition (i) holds for $\pair{D',t}$.
Since $\delta_{\mathrm{in}}(D)\leq\delta_{\mathrm{in}}(D')$ and for every $X\ss U$ we have $\Gamma_D(X)\leq\Gamma_{D'}(X)$, conditions (ii) and (iii) hold also for $\pair{D',t}$.
\end{proof}
We can prove that the $t$-diagnosability is monotonic under temporal restriction.
\begin{proposition}\label{prop:subgraph_diag}
Let $D=(\setU,\setE)$ be a diagnostic graph. 
Suppose given subsets $\setU_i\ss\setU$ that cover $\setU$, in the sense that $\setU=\cup_{i\in I}U_i$, and for each $i\in I$, let $D_i\ss D$ be the largest subgraph with vertices $\setU_i$.
Then if each $D_i$ is $t$-diagnosable, so is $D$.
\end{proposition}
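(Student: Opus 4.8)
The plan is to verify the three conditions of \cref{thm:pcm} for $D$ by leaning on the fact that each $D_i$, being $t$-diagnosable, already satisfies them. Throughout, write $n=|\setU|$ and $n_i=|\setU_i|$, and recall that $D_i$ is the \emph{induced} subgraph of $D$ on $\setU_i$: every edge of $D$ with both endpoints in $\setU_i$ survives in $D_i$, so the in-neighbors of a vertex $v\in\setU_i$ computed in $D_i$ are exactly its in-neighbors in $D$ that happen to lie in $\setU_i$. Conditions \ref{itm:t_ub} and \ref{itm:t_lb} fall out immediately. Since the cover is nonempty, $n\ge n_i\ge 2t+1$ for any index $i$, giving \ref{itm:t_ub}. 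For \ref{itm:t_lb}, any $v\in\setU$ lies in some $\setU_i$, where $\delta_{\tn{in}}(D_i)\ge t$ already forces $v$ to have at least $t$ in-edges; these edges persist in $D$, so $\delta_{\tn{in}}(D)\ge t$.

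The substance is condition \ref{itm:t_p}, which I would prove by contradiction. Suppose it fails: there is an integer $p$ with $0\le p<t$ and a set $X\subset\setU$ with $|X|=n-2t+p$ yet $|\Gamma_D(X)|\le p$. Write $\bar X=\setU\setminus X$, so $|\bar X|=2t-p$, and let $W=\bar X\setminus\Gamma_D(X)$ collect the vertices of $\bar X$ that no member of $X$ tests. Since $\Gamma_D(X)\ss\bar X$, we have $|W|\ge(2t-p)-p=2t-2p>0$, so I may fix some $v\in W$ and an index $i$ with $v\in\setU_i$. Because $v\notin\Gamma_D(X)$ and $v\notin X$, no in-neighbor of $v$ in $D$ lies in $X$, that is, all of them lie in $\bar X$. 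Passing to $D_i$, the bound $\delta_{\tn{in}}(D_i)\ge t$ supplies at least $t$ in-neighbors of $v$ inside $\setU_i$, all of which therefore lie in $\bar X\cap\setU_i$; together with $v$ itself this shows $m_i:=|\bar X\cap\setU_i|\ge t+1$.

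It remains to feed this back into $D_i$. Set $X_i=X\cap\setU_i$ and $p'=2t-m_i$. From $t+1\le m_i\le|\bar X|=2t-p$ one reads off $0\le p'\le t-1$ and $p'\ge p$, while $|X_i|=n_i-m_i=n_i-2t+p'$ exhibits $X_i$ as an admissible witness for condition \ref{itm:t_p} of the $t$-diagnosable graph $D_i$; hence $|\Gamma_{D_i}(X_i)|>p'$. Finally I would check the monotonicity $\Gamma_{D_i}(X_i)\ss\Gamma_D(X)$: any $w\in\Gamma_{D_i}(X_i)$ is an out-neighbor in $D_i$, hence in $D$, of some $u\in X_i\ss X$, and $w\in\setU_i\setminus X_i$ forces $w\notin X$, so $w\in\Gamma_D(X)$. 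Combining, $|\Gamma_D(X)|\ge|\Gamma_{D_i}(X_i)|>p'\ge p$, contradicting $|\Gamma_D(X)|\le p$. Thus \ref{itm:t_p} holds for $D$, and with it $D$ is $t$-diagnosable.

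I expect the main obstacle to be the bookkeeping in the middle step: producing a single block $\setU_i$ that contains enough of $\bar X$ to trigger $D_i$'s expansion inequality, and then calibrating the offset $p'$ so that $X_i$ lands on exactly the cardinality $n_i-2t+p'$ demanded by condition \ref{itm:t_p}, with $p'$ still in the legal range $[0,t)$. The load-bearing observation is that any vertex of $\bar X$ missed by $\Gamma_D(X)$ must drag all of its $\ge t$ in-neighbors into one block $\bar X\cap\setU_i$, which both certifies $m_i\ge t+1$ and pins down $p'\ge p$; the inclusion $\Gamma_{D_i}(X_i)\ss\Gamma_D(X)$ is then what lets the local shortfall in $D_i$ contradict the assumed shortfall in $D$.
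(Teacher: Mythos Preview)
Your argument is correct, but it takes a genuinely different route from the paper. The paper works directly from \cref{def.t_diag}: given a syndrome on $D$ arising from at most $t$ faults, its restriction to any $D_i$ is a syndrome on $D_i$ with at most $t$ faults, so by $t$-diagnosability of $D_i$ the faulty nodes inside $\setU_i$ are uniquely determined; since the $\setU_i$ cover $\setU$, the full fault set is determined. That is the whole proof---three sentences, purely semantic, never touching \cref{thm:pcm}.

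By contrast you go through the structural characterization, verifying \ref{itm:t_ub}, \ref{itm:t_lb}, and \ref{itm:t_p} for $D$ from the corresponding conditions on the $D_i$. The first two are immediate; for \ref{itm:t_p} your key move---finding a vertex $v\in\bar X\setminus\Gamma_D(X)$, using $\delta_{\tn{in}}(D_i)\ge t$ to force $m_i=|\bar X\cap\setU_i|\ge t+1$, and calibrating $p'=2t-m_i$ so that $X_i$ becomes an admissible witness in $D_i$---is sound, and the inclusion $\Gamma_{D_i}(X_i)\ss\Gamma_D(X)$ closes the contradiction cleanly. What your approach buys is an explicit demonstration that the Hakimi conditions themselves transfer along induced covers, which is a strictly stronger statement than the proposition and could be useful if one later needs quantitative control on $\Gamma$ or $\delta_{\tn{in}}$. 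What the paper's approach buys is brevity and robustness: it does not depend on \cref{thm:pcm} at all, and would survive unchanged under variants of the fault model where the characterization might look different.
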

\begin{proof}
Assume each $D_i$ is $t$-diagnosable, and suppose that $\sigma$ is a syndrome for a $t$-consistent fault set on $D$ (so it has at most $t$-many faulty nodes).
Then its restriction to each $D_i$ has fewer than $t$-many faulty nodes, so the faulty nodes in $D_i$ can be accurately diagnosed.
Since every node is in some $D_i$, every node can be accurately diagnosed, so $D$ is $t$-diagnosable.
\end{proof}

\myParagraph{Example}
Let us consider a small interval of time, say $[a, a+0.02]$.
In such an interval of time we should get three different diagnostic graphs for the \SI{100}{\hertz} rate (see \cref{fig:diagnostic_graph_100hz}).
Each of those graph is $1$-diagnosable.
Now suppose we can test consistency across one and two consecutive instants, obtaining the temporal diagnostic graph in \cref{fig:dstacks}.
The diagnosability of the temporal diagnostic graph in \cref{fig:dstacks} is now $3$-diagnosable (\cf \cref{thm:pcm}).
From this example, one can see the advantages of including the temporal dimension for fault diagnosis.
\begin{figure}
  \centering
  \includegraphics[height=5cm]{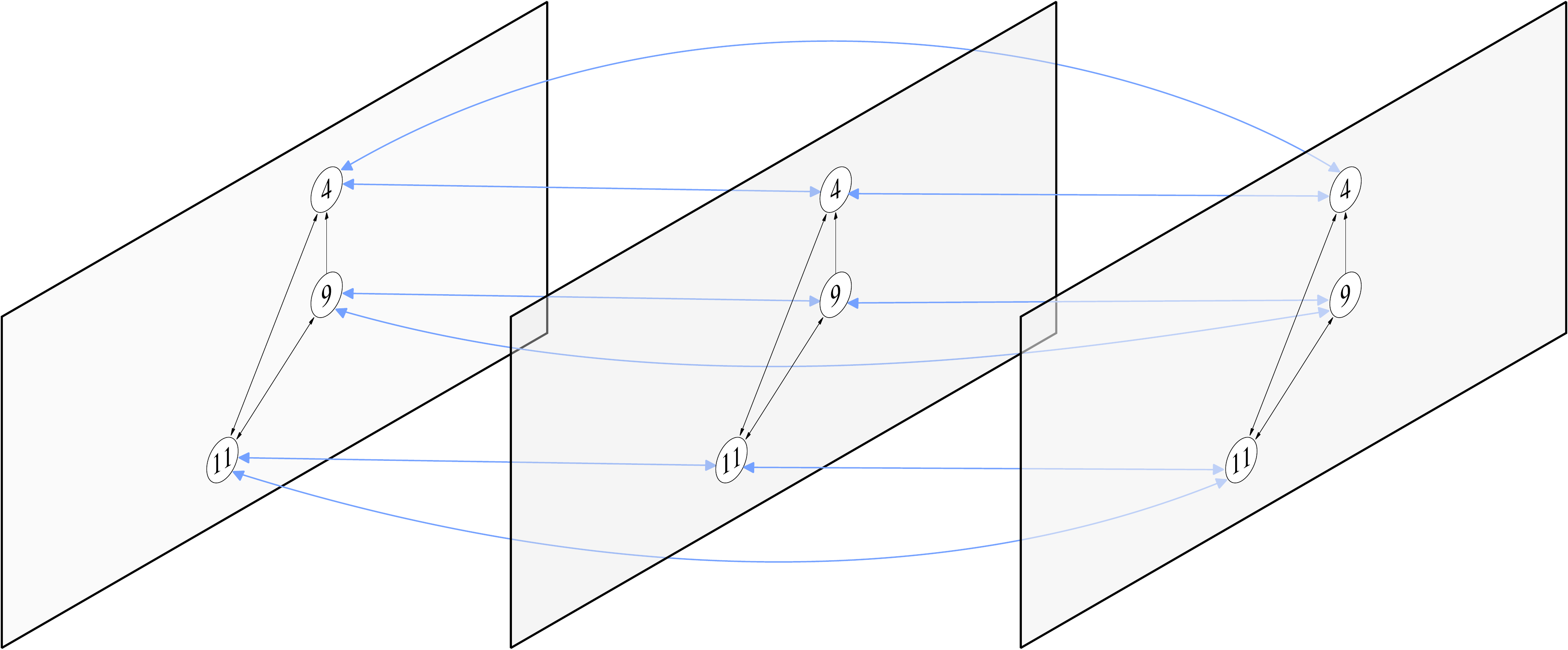}
  \caption{Example of \emph{temporal diagnostic graph} over a small time interval (\SI{0.02}{\second}).}
  \label{fig:dstacks}
\end{figure}

\section{Temporal Type Theory}

This section shows that temporal diagnostic graphs can be elegantly described with the language of \emph{topos theory}.
In particular, in order to organize the information of multiple graphs that interact over time-intervals of varying length, as well as the diagnosability and collection of faulty nodes in these graphs, we use the mathematical language of \emph{temporal type theory} as described in~\cite{Schultz19book-ttt}. 
In the following,~\cref{sec:ttt} briefly reviews basic concepts of topos theory, while~\cref{sec:tdgInterpretation} tailors these concepts to temporal diagnostic graphs.

\subsection{Sheaves, Behavior Types, and Toposes}\label{sec:ttt}

In temporal type theory, one models various types of behavior using data structures called \emph{sheaves}. 
A sheaf $B$ assigns a set $B(a,b)$ to each open interval $(a,b)\ss\rr$, understood as the set of \emph{possible behaviors} that can occur over the time-window $(a,b)$.
For any subinterval $(a',b')\ss(a,b)$, the sheaf also assigns a \emph{restriction map} $\rest{-}{a'}{b'}\colon B(a,b)\to B(a',b')$, which  crops the behavior $\beta\in B(a,b)$ to the subwindow, returning some $\rest{\beta}{a'}{b'}\in B(a',b')$.

Every aspect of the vehicle motion and the perception system considered in this paper has a representation in this common language of behavior types.
For example, at any moment of time, every variable $v\in V$ of type $\Type(v)$ has an associated a behavior type
\[
  \mathrm{Var}_v(a,b)\coloneqq\{x\colon(a,b)\to \Type(v)\tn{, continuous}\}
\]
where the restriction map $\rest{-}{a'}{b'}\colon \mathrm{Var}_v(a,b)\to\mathrm{Var}_v(a',b')$ is given by composition.
That is, given $x\in\mathrm{Var}_v(a,b)$, define $\rest{x}{a'}{b'}$ to be the composite $(a',b')\ss(a,b)\xrightarrow{x} \Type(v)$.
Below we will discuss behavior types for diagnosing faults in the perception pipeline as data arrives over intervals of time.
To do so, we first construct the behavior type $\mathrm{Grph}$ of all temporal diagnostic graphs.
A $(a,b)$-behavior, \ie element of $\mathrm{Grph}(a,b)$, consists of a graph $G=(U,E)$ together with a function $U\to(a,b)$, meaning that each node is assigned a moment in that interval, indicating the moment at which it occurs.
For any $(a',b')\in(a,b)$, the restriction map sends $G$ to $\rest{G}{a'}{b'}$, the maximal subgraph on the nodes that exist within the subinterval $(a',b')$; in other words it includes all the edges from $G$ between those nodes.

All possible behavior types, as sheaves, form a \emph{topos} $\cat{B}$.
A topos is a well-studied algebraic system that has many useful formal properties~\cite{MacLane12book-toposTheory,Johnstone02book-toposTheory}.
For example behavior types $A,B$ can be added (a behavior of $A+B$ is a behavior of $A$ or a behavior of $B$), multiplied (a behavior of $A\times B$ consists of a pair $(a,b)$ of behaviors), exponentiated, etc.
Moreover every topos comes equipped with its own \emph{internal language}, which includes a full-fledged constructive logic.
In the internal language of $\cat{B}$, the booleans are replaced with a notion of temporal truth values, a behavior type $\prop$ of all propositions.
It comes equipped with all the constants, operations, and quantifiers, $\true,\false,\wedge,\vee,\imp,\neg,\forall,\exists$.
The internal language can be used to provide very short and intuitive descriptions of complex objects.
One writes the description set-theoreatically, and then the topos machinery turns that description into one that varies in time.
For example, if one writes the standard definition of vector space in the internal language, the result will be a vector space changing in time. 

\myParagraph{Example}
We can use the internal language to define the sheaf of \emph{upper-bounded naturals}, denoted $\ol\nn$.
In the internal language it is defined by $\ol\nn\coloneqq\{u:\nn\to\prop\mid\forall n:\nn, u(n)\imp u(n+1)\}$, meaning ``if $n$ is an upper bound, then so is $n+1$''.
This translates to saying that a behavior $u\in\ol{\nn}(a,b)$ assigns a natural number to each open subinterval of $(a,b)$, and smaller subintervals can be assigned smaller natural numbers.

\subsection{Temporal Diagnostic Graphs as Behavior Types}\label{sec:tdgInterpretation}

Temporal diagnostic graphs such as the one in \cref{fig:dstacks} have a special property: any clipping to a subinterval has a lower diagnosability number.
To be more precise, let $D=(U,E)$ be a graph and for any $a\leq b\in\nn$, let $D_{[a,b]}=(U_{[a,b]},E_{[a,b]})$ denote the graph with vertices $U_{[a,b]}=\{(\tau,i)\in\nn\times U\mid a\leq \tau \leq b\}$ and edges
\begin{equation*}
  E_{[a,b]}=\big\{\left((\tau,i),(\tau,j)\right)\mid (i,j)\in E\big\}\;\cup\;\big\{((\tau,i),(\tau+1,i))\mid i\in U, a\leq \tau \leq b-1\big\}
\end{equation*}
Thus, the graph $U_{[a,b]}$ consists of $b-a+1$ copies of $D$, as well as edges connecting corresponding nodes across neighboring copies.
For example $D_{[a,a]}\cong D$, whereas we can imagine $D_{[a,a+1]}$ as two panes, each a copy of $D$, and an edge from each node in the first pane to the corresponding node in the second pane.  

Now for any frequency $f$ and graph $D=(U,E)$, consider a temporal graph $\ol{D}$ with $\ol{D}(a,b)=D_{\big[\ceil{a/f},\floor{b/f}\big]}$,%
\footnote{Note that the graph in \cref{fig:dstacks} has strictly more edges than $D_{[0,2]}$, so we can still conclude the result from~\cref{cor:more_edges}.}
 where each vertex $(i,u)$ is sent to the time $\ceil{a/f}+f\cdot i$. It consists of multiple panes, spaced $f$-apart, each isomorphic to $D$.
We call temporal graphs constructed as $\ol{D}$ was above \emph{time-regular graphs} and denote the subsheaf of time-regular graphs by $\mathrm{Grph_{tReg}}\ss\mathrm{Grph}$.

It follows from \cref{prop:subgraph_diag} that if one clips a time-regular graph $\ol{D}(a,b)$ to a smaller interval $(a',b')$, the result will have a smaller diagnosability number. 
Thus we obtain a map of sheaves $t\colon\mathrm{Grph_{tReg}}\to\ol{\nn}$, sending each time-regular graph $D$ to its diagnosability number $t(D)$.
Note that this contains a large amount of information: $D$ is not just one graph but a graph $D(a,b)$ for every duration of time, and similarly $t(D)$ consists of a natural number $t(D)(a,b)\in\nn$ for every interval $(a,b)$.

Using the language of temporal type theory we can also model fault detection over arbitrary intervals.
Suppose actual data is coursing through the perception pipeline.
For every interval $(a,b)$ of time, we have a diagnosability graph $D(a,b)$, and every node in it has published values for each of its variables.
This induces a function $\mathtt{Fault\_free}\colon U\to\prop$, that sends every \module $i\in U$ to the union of all subintervals throughout which it is fault-free.
The perception system is trying to infer this function, but on short intervals does not have enough data to determine it.
Thus it returns $\mathtt{Not\_known\_faulty}\colon U\to\prop$ with $\mathtt{Fault\_free}\imp \mathtt{Not\_known\_faulty}$.
For short intervals, when the $t$-diagnosability number is low, this will often be a strict inclusion: some faulty nodes may fail to be diagnosed.
However by auditing longer intervals, the decision layer can often \emph{ex post facto} diagnose the faulty nodes and obtain an equality.

\section{Conclusion}\label{sec:conclusions}

We presented a novel methodology for fault diagnosis in perception systems.
Towards this goal, we drew connections with the literature on diagnosability for multiprocessors systems 
and generalized it to account for heterogeneous \modules interacting over time, as the ones arising in perception systems.
In particular, we showed that considering the temporal dimension, \ie assessing the consistency of the system behavior over time, 
 has the potential to enhance diagnosability, while still enabling the use of existing tools for fault detection. 
Finally, 
we showed that the proposed monitoring approach can be elegantly described with the language of
 \emph{topos theory} which allows the formulation of diagnosability and fault detection over arbitrary time intervals.
 While the approach is fairly general and applicable to a variety of perception systems, we 
 considered a \emph{localization} system as a case study.


We believe that the proposed notion of \emph{temporal diagnostic graphs} can complement the literature and enhance the current 
practice, contributing to the goal of achieving safety and trustworthiness in autonomous vehicle applications.
For instance, a system designer can use the tools proposed in this paper to assess the diagnosability before deploying the vehicle on public roads, or design the system in such a way that its fault diagnosability is maximized.
While deployed, the proposed framework allows the vehicle to have a greater awareness of its operational envelope and 
enables real-time perception monitoring via runtime diagnosability.
Finally, for regulators, system-level guarantees provide a more solid and rigorous ground for certification, and,
 in the unfortunate case of an accident, the proposed approach increases accountability by providing
  formal evidence about the root cause of the  failures. 

This work opens a number of avenues for future work. First, we plan to provide more examples of perception systems 
that can be modeled with (and can benefit from)  the proposed monitoring approach. Second, we plan to build on the results presented in this 
paper to model an entire AV software architecture using the language of temporal type theory.



\printbibliography
\end{document}